%% file: axiv_1.tex
\documentclass{article}
\usepackage{xcolor}
\usepackage[preprint]{corl_2026}
\makeatletter
\renewcommand{\@noticestring}{CoRL 2026 Workshop on Memory for Robot Foundation Models, Austin TX, USA.}
\makeatother

\usepackage{booktabs}
\usepackage{hyperref}
\usepackage{tikz}
\usetikzlibrary{arrows.meta,calc,decorations.markings}

\usepackage{amsmath}
\usepackage{amssymb}
\usepackage{amsthm}
\usepackage{enumitem}

\newtheorem{proposition}{Proposition}

\newtheorem{corollary}{Corollary}
\theoremstyle{remark}
\newtheorem{remark}{\textbf{Remark}}

\title{Cheap Retrieval and Observation Credit Assignment from One Quantity: The Value of Information}

\author{
  Daniel Strauss$^{1}$ \quad Nikolaos Sotirakis$^{1}$ \quad Zhenshan Bing$^{2,1,3}$ \quad Alois Knoll$^{1}$ \\
  \normalfont $^{1}$Technical University of Munich \quad $^{2}$Nanjing University \quad $^{3}$Harbin Institute of Technology \\
  \normalfont \texttt{daniel.strauss@tum.de}
}

\begin{document}
\maketitle

\begin{abstract}
    Memory in reinforcement learning raises two problems. Attention finds the relevant item by scoring the whole store, which is costly for robots under latency and energy limits. When a retrieved memory pays off, credit for acquiring it must cross the whole sequence. We argue that both are governed by one quantity, the value of information, and propose to estimate it from archived rollouts with a learned world model. We obtain that, when read at the observation, it yields a provable lower bound on the optimal value of the acquiring action, which floors its TD target and attributes this floor to the observation that earned it. Read at the decision, it defines the optimal retrieval set. A pilot retrieval system, to be trained and improved with this signal, solves a memory task with reduced context.
\end{abstract}

\keywords{reinforcement learning, memory retrieval, POMDP, temporal credit assignment, value of information, world models}

\input{SHORT_introduction}

\input{v2_section_1}

\input{limitations}

\clearpage

\bibliography{bibliography}

\acknowledgments{
We used Claude and ChatGPT to check and help formalize the proofs in the appendix, to assist with locating related literature, and to edit the text. All results were verified by the authors, who take full responsibility for the content.
}
\newpage
\appendix

\input{v2_appendix}
\input{supplementary}

\end{document}

%% file: SHORT_introduction.tex
\section{Introduction}

Consider a robot replacing a control board inside an unfamiliar appliance. Mechanically identical connectors attach behind the service panel frame, so before unplugging them the robot has to move its camera to look, and once the board is out that arrangement cannot be observed again. Hundreds of steps later each cable has to go back to the right socket, and each decision rests on one frame among thousands. Attending over all stored frames at every step to find it is what makes memory expensive on a robot.
Harder still is learning which observations to make in the first place: Transformer agents fail on long-horizon tasks when they must first act to observe a cue ~\citep{ni2023transformers}; we call this the observation credit assignment problem (O-CAP). Both problems require knowing which information matters at which decision before retrieving it: its information value, following~\citet{howard1966information}.

Mechanisms that avoid this search compress instead and so commit to what is kept before knowing which decision it serves: recurrent states \citep{elman1990finding, hochreiter1997long, chung2014empirical}, still strong POMDP baselines \citep{ni2021recurrent}, recurrent world models~\citep{hafner2023mastering}, recurrent-transformer hybrids \citep{cherepanov2026recurrent, gupta2026memo}. Another way is through hierarchical or chunked attention \citep{lampinen2021towards, cho2024spatially}.
External memory models decouple storage capacity from the size of the policy's active representation, so that the history is kept with limited compression \citep{graves2014neural, graves2016hybrid, yang2016lie, parisotto2017neural}, including recent policies that retrieve VLA-generated tokens \citep{shi2026memoryvla}.
This moves the problem to retrieval: sparse retrieval gives unselected items no gradient, dense attention's cost grows with the store, and consolidation \citep{shi2026memoryvla} may discard what matters later.

General credit assignment methods redistribute delayed reward towards contributing events \citep{arjona2019rudder} or estimate the influence of earlier actions from later trajectory information \citep{harutyunyan2019hindsight, mesnard2020counterfactual, li2026counterfactual}. Temporal Value Transport (TVT), closest to our setting, does so via a later memory retrieval \citep{hung2019optimizing}. We instead define credit as the information value of an observation and obtain a provable floor on the TD target. TVT may implicitly learn the form of the information value derived here.

We propose to learn information value from archived rollouts as the change in achievable return when a decision gains access to a subset of the recorded history, which also selects what to retrieve.

%% file: v2_section_1.tex
\input{v2_figures/lower_bound_b}
\section{Consuming the Value of Information}
Consider a POMDP in which $x$ is relevant information for a decision at $t$. It is possible that the exploring agent will encounter both $x$ and the decision in very few episodes. This is especially the case if $x$ can only be gathered by accumulating observations from time steps that are distal to one another and to $t$.
To estimate the contribution of $x$ to predicting the value of a state at $t$, we need to compare predictions made with and without $x$. We do not need to collect separate trajectories in which $x$ was not acquired. Instead, trajectories in which $x$ was observed can be reused under different knowledge states by deliberately masking $x$ from the recorded history.
To address sparse $x$ encounters we propose to learn a world model. From the few encounters in which $x$ was perceived, it can identify how outcomes depend on $x$, because it learns this dependency as a supervised prediction of the outcome at the decision on the full archive.
We subsequently estimate information value by optimizing expected return within the learned model under respective knowledge states.

\paragraph{Estimating information value}
Let $\mu$ be the policy of the exploring agent during learning, which gives every
action a positive probability. Let $H_t=(h_0,h_1,\dots,h_t)$ denote the recorded history, where $h_s=(a_{s-1}, r_{s-1}, o_s)$ contains the information acquired at timestep $s$. Write $[t]:=\{0,\dots,t\}$. For index set $A$ let $H_A$ denote the set of history items, that occur at a timestep in $A$ and $A \subseteq B$ means $H_A \subseteq H_B$. Index sets may also contain $\alpha_s$, referring to the action part $a_{s}$ of $h_{s+1}$.
Following~\citet{howard1966information}, we define the value with information $H_A$ at decision $t$:
\begin{equation}\label{eq:def_U}
    \begin{aligned}
        U_t(A \mid B) := \mathbb{E}_{\pi_A}[G_t \mid H_B], \quad \pi_A \in \arg\max^\prec_{\pi \in \Pi_{A,t}} \mathbb{E}_\pi[G_t \mid H_A], \\ \Pi_{A,t} := \big\{\pi : \pi_k = \mu_k \text{ for } k < t,\ \pi_t(\cdot \mid H_t) = \pi_t(\cdot \mid H_A)\big\}.
    \end{aligned}
\end{equation}
where $G_t:=\sum_{k\ge t}\gamma^{k-t}r_k$ is the discounted future return with discount $\gamma\in(0,1)$.
$\prec$ fixes an order on the action space, and the maximisers deterministically play the $\prec$-first maximising action at $t$.
$U_t(A \mid B)$ is a function of $H_B$: the value of choosing the action at $t$ from $H_A$, evaluated with the information $H_B$. This term is easier to interpret when $B \subseteq A$: We already know $H_B$ (e.g. whether water can be used to extinguish fire), what can we obtain for decision $t$ (e.g. a 100\$ bill catches fire) if we extend our knowledge to $H_A$ (e.g. $H_B \cup \{ \text{where to find water}\}$).

Given $A \subseteq [t]$, we write:
\begin{equation}\label{eq:def_Qstar}
    \begin{aligned}
Q_t^*(H_A,a) &:=
\mathbb E_{\mu, a_t \leftarrow a}\!\left[ r_t+\gamma U_{t+1}([t+1]\mid[t+1]) \,\middle|\, H_A\right],\\
(\mathcal TQ)_t(H_{t},a)&:=\mathbb{E}_{\mu, a_t \leftarrow a}\Big[r_{t}+\gamma\max_{a'}Q(H_{t+1},a')\ \Big|\ H_{t}\Big]
    \end{aligned}
\end{equation}
for the optimal action value and the Bellman optimality operator with fixed point $Q^*$ \citep{bellman1957markovian}. $\mu, a_t\leftarrow a$ denotes that until $t-1$ actions were sampled from $\mu$ and at $t$ action $a$ was chosen.

The marginal value of information $H_T$ at decision $t$ is then
\begin{equation}\label{eq:Lambda}
\Lambda_t(T; A \mid B) := U_t(A\cup T\mid B)-U_t(A\mid B)\;.
\end{equation}
Thus, information value is determined by the change in achievable return when $H_T$ is made available, rather than by the return of the current policy.

To estimate $U_t$, we fit a world model $\hat m_\psi(H_t,a)$ on the full recorded history with a likelihood loss $\ell$ over trajectories $\tau$ of the archive $\mathcal D$,
\begin{equation}\label{eq:loss}
\mathcal L(\psi)=\mathbb{E}_{(\tau,t)\sim\mathcal D}\big[\ell\big(\hat m_\psi(H_t,a_t),(r_t,o_{t+1})\big)\big],
\end{equation}

 with randomly sampled $A\subseteq[t]$, and every action $a$, $\hat Q_t(H_A,a)$ is regressed onto the Bellman target
\begin{equation}\label{eq:Uhat}
\hat r+\gamma\max_{a'}\hat Q_{t+1}((H_t, a, \hat r,\hat o'),a'),\qquad (\hat r,\hat o')\sim\hat m_\psi(H_t,a),
\end{equation}

To estimate $\hat U(A \mid B) := u_{\theta_U^*}(H_B, A)$, $\rho_t$ mixes credit pairs $B = [s-1] \cup \{\alpha_{s-1}\} \subseteq A$ and retrieval pairs $B = [t]$ and we regress using a loss $\ell'$
\begin{equation}
    \begin{aligned}
        \theta_U^* = \arg\min_{\theta_U} \mathbb{E}_{(\tau,t)\sim\mathcal{D},\ (A,B)\sim\rho_t}\Big[\ell'\Big(u_{\theta_U}(H_B, A), \hat Q_t\big(H_t, \arg\max^\prec_a \hat Q_t(H_A, a)\big)\Big)\Big].
    \end{aligned}
\end{equation}

\paragraph{Using the estimates and training}

For a given trajectory $\tau$ for decision at time $t$ we will assign this information credit $b$ to action $a$, which gathered the information from time $s$. Let $B_s(a) := [s-1] \cup \{\alpha_{s-1} = a\}$ and $C_{s,t,R}(a) := ([t] \setminus (R \cup \{s\})) \cup \{\alpha_{s-1} = a\}$

\begin{equation}\label{eq:b}
    b_{s,t}(a) := \sup_{R \subseteq \{s+1,\dots,t\}} \Lambda_t\Big(\{s\};\ \underbrace{C_{s,t,R}(a)}_{\text{available at } t \text{ without } h_s \text{ and } H_R \text{, but with } a_{s-1}} \,\Big|\, \underbrace{B_s(a)}_{\text{available at $s-1$, with $a_{s-1}$}}\Big).
\end{equation}

 This term finds the set of observations $R$ that come after $s$ and before $t$, such that the win of observing $s$ is maximal. The idea is that $R$ should catch all observations that are mostly redundant given $s$ and appear only after $s$, but at the same time let $s$ be combined with synergistic information. Thereby, if for a decision at $t$, there are many redundant trajectory items $s^{(1)} < s^{(2)} < \dots < s^{(m)}$ the earliest receives the credit for the information they share.
Additionally, if $\Lambda_t(\{s\};[s-1] \mid [s-1]) \approx 0$, but there is $s<u<t$ such, that $\Lambda_t(\{s\};[s-1] \cup \{u\} \mid [s-1]) \gg 0$, the synergistic contribution for $s$ and $u$ should be credited for both (Fig.~\ref{fig:voi}).
There may be many ways to approximate $b$ with a lower bound for faster computation, e.g. by restricting $R$ to steps spatially close to $s$.
\begin{equation}\label{eq:def_Lst}
\begin{aligned}
L_{s,t}(a) := \mathbb{E}_\mu\Big[\sum_{k=s-1}^{t-1} \gamma^{k-s+1} r_k \,\Big|\, H_{s-1},\ a_{s-1} = a\Big] + \gamma^{t-s+1}\, U_t\Big([s-1] \cup \{\alpha_{s-1}\} \,\Big|\, [s-1] \cup \{\alpha_{s-1}\}\Big).
\end{aligned}
\end{equation}
$L_{s,t}(a)$ is a lower bound for the attainable return, if at $t$ the policy must ignore the information collected by action $a_{s-1}$.
By Corollary \ref{corr:L_b_bound} (App. \ref{app:acq_new}) $\gamma^{t-s+1} b_{s,t}(a_{s-1}) + L_{s,t}(a_{s-1})$ is a lower bound of the optimal TD value at $H_{s-1},a_{s-1}$.

\begin{equation}\label{eq:update}
    \mathcal (T_b Q)_{s-1}(H_{s-1}, a_{s-1}) =   \max\big\{(\mathcal TQ)_{s-1}(H_{s-1},a_{s-1}),~ \max_t ~\gamma^{t-s+1} b_{s,t}(a_{s-1}) + L_{s,t}(a_{s-1})  \big\}
\end{equation}

This is similar in spirit to the lower-bound targets of~\citet{he2016learning} and \citet{abraham2026long}. A plain floor without the credit term would be at least as tight (App.~\ref{app:acq_new}, Remark~\ref{rem:quick_note}).
Since both terms of $b_{s,t}$ are estimated at the decision, close to the reward, we expect its estimate to converge faster than that of the plain floor.
For retrieval, let each recalled memory cost $C$. The optimal retrieval set is
\begin{equation}
M_t^* = \arg\max_{M\subseteq [t]} \Lambda_t(M;\emptyset\mid [t]) - C\lvert M \rvert \overset{\text{(a)}}\approx \arg\max_{M\subseteq [t]} \Lambda_t(M;\emptyset\mid M^*_{t-1}) - C\lvert M \rvert.
\end{equation}

Approximation (a) updates the retrieved set once per environment step, conditioning on the previously held set. Alternatively, the update can run asynchronously from the policy. This approximation has yet to be empirically validated.
During inference, it is important to compute $\hat M_t^*$ cheaply.
In the next paragraph, we propose one possible method to do this while exploiting temporal dependencies between neighboring time steps.

\paragraph{Memory Navigation}\label{p:mem_nav}
Orthogonal to approximation (a), we expect $M^*_t$ to be mostly a small, local edit of $M^*_{t-1}$ in many robotic environments. Memory Navigation exploits this with a persistent pointer, the Memory Ghost, that moves by relative steps through an external memory indexed by where and when each memory was recorded. Recalling thereby resembles observing: a second, internal action head moves the Ghost through memory as the ordinary head moves the agent through the environment, and the memories it reaches enter the Transformer-XL context like observations; retrieval costs one layer regardless of store size. In this pilot, physical space stands in for a learned memory space, and the internal head is trained by PPO on task reward. $\hat\Lambda_t(\hat M_t;\emptyset\mid [t]) - C\lvert \hat M_t \rvert$ would be its reward, but neither it nor the floor is used yet. After pretraining on MiniGrid-Memory-S9 with a 64-token context and fine-tuning on S17Random with 5 tokens, Memory Navigation reached a mean return of 0.95 after 0.59M steps, versus 0.94M for Gaussian absolute-position retrieval (6 seeds each) and 1.38M without a retrieval head (3 seeds); only the full-context pretrained model was faster. The baseline without a retrieval head still solved the task despite its insufficient context, by waiting a cue-dependent number of steps and thereby storing the cue in its temporal embedding. Code and rollout videos are provided in App.~\ref{app:supplementary}.

%% file: v2_figures/lower_bound_b.tex
\definecolor{credit}{HTML}{D55E00}
\definecolor{recall}{HTML}{0072B2}
\definecolor{bellman}{HTML}{009E73}
\begin{figure}[t]
\centering
\begin{tikzpicture}[
  x=1cm,y=1cm,font=\scriptsize,
  st/.style={circle,fill=black!80,inner sep=0pt,minimum size=5pt},
  nt/.style={circle,draw=black!40,fill=white,inner sep=0pt,minimum size=5pt},
  term/.style={rectangle,draw=black!80,inner sep=1.8pt},
  nterm/.style={rectangle,draw=black!40,text=black!55,inner sep=1.8pt},
  taken/.style={-{Stealth[length=4pt]},black!85,thick},
  untaken/.style={-{Stealth[length=3.5pt]},black!35},
  bell/.style={-{Stealth[length=4pt]},bellman,thick},
  lt/.style={black!85},
  lu/.style={black!50},
  obs/.style={rectangle,draw=black!55,fill=white,inner sep=1.5pt,rounded corners=1pt},
  obsu/.style={obs,draw=black!35,text=black!50},
  dual/.style={black!55,semithick},
  target/.style={densely dotted,thick,-{Stealth[length=3.5pt]}},
]
\def\bh{1.0}
\def\bw{0.17}
\newcommand{\vbar}[7]{%
  \fill[black!18] ($(#1,#2)+(-\bw/2,0)$) rectangle ++(\bw,#4*\bh);
  \fill[#7!80] ($(#1,#2)+(-\bw/2,#4*\bh)$) rectangle ++(\bw,{(#3-#4)*\bh});
  \draw[black!80,line width=0.35pt] ($(#1,#2)+(-\bw/2,0)$) rectangle ++(\bw,#3*\bh);
  \node[#6,anchor=north,inner sep=1.5pt] at (#1,#2) {#5};}
\newcommand{\many}[4]{%
  \draw[#1,-] (#3) -- ($(#3)!0.28!(#4)$);
  \draw[#1,-,dotted,line width=#2] ($(#3)!0.37!(#4)$) -- ($(#3)!0.63!(#4)$);
  \draw[#1] ($(#3)!0.72!(#4)$) -- (#4);}

\node[st,label={[black!85]left:spawn}] (s0) at (0,0) {};
\node[st,label={[black!85]below:$s{-}1$}] (sm) at (1.2,0) {};
\node[st] (s) at (1.9,0.8) {};
\node[st] (sp1) at (2.6,0) {};
\node[st,label={[black!85]below:$k{-}1$}] (km) at (3.9,0) {};
\node[nt] (sp) at (4.6,0.8) {};
\node[st] (k) at (5.3,0) {};
\node[st,label={[black!85]below:$u{-}1$}] (um) at (6.6,0) {};
\node[st] (u) at (7.3,0.8) {};
\node[st] (up1) at (8.0,0) {};
\node[st,label={[black!85]below left=-1pt:$t$}] (t) at (9.3,0) {};
\node[st] (l1) at (10.0,0.5) {};
\node[term] (r1) at (12.1,0.5) {$r{=}1$};
\node[nt] (l2) at (10.0,-0.5) {};
\node[nterm] (r0) at (12.1,-0.5) {$r{=}0$};

\many{taken}{1.3pt}{s0}{sm}
\draw[taken] (sm) -- node[lt,left=1pt,pos=0.45]{look} (s);  \draw[taken] (s) -- (sp1);
\draw[untaken] (sm) -- node[lu,below,pos=0.66]{skip} (sp1);
\many{taken}{1.3pt}{sp1}{km}
\draw[untaken] (km) -- node[lu,left=1pt,pos=0.45]{look} (sp);  \draw[untaken] (sp) -- (k);
\draw[taken] (km) -- node[lt,below,pos=0.66]{skip} (k);
\many{taken}{1.3pt}{k}{um}
\draw[taken] (um) -- node[lt,left=1pt,pos=0.45]{look} (u);  \draw[taken] (u) -- (up1);
\draw[untaken] (um) -- node[lu,below,pos=0.66]{skip} (up1);
\many{taken}{1.3pt}{up1}{t}
\draw[taken] (t) -- node[lt,above left=-2pt,pos=0.72]{left} (l1);
\draw[untaken] (t) -- node[lu,above right=-2pt,pos=0.72]{right} (l2);
\many{taken}{1.3pt}{l1}{r1.west}
\many{untaken}{1.0pt}{l2}{r0.west}

\node[obs,anchor=south west] (o1) at ($(s)+(0.2,0.12)$) {obs: $x_1$};
\draw[black!55] (o1.south west) -- (s);
\node[obsu,anchor=south west] (o2) at ($(sp)+(0.2,0.12)$) {obs: $x_1$};
\draw[black!35] (o2.south west) -- (sp);
\node[obs,anchor=south west] (o3) at ($(u)+(0.2,0.12)$) {obs: $x_2$};
\draw[black!55] (o3.south west) -- (u);

\def\by{-1.7}
\vbar{1.5}{\by}{0.547}{0.228}{look}{lt}{credit}   \vbar{2.3}{\by}{0.325}{0.325}{skip}{lu}{credit}
\vbar{4.2}{\by}{0.266}{0.266}{look}{lu}{credit}   \vbar{5.0}{\by}{0.673}{0.673}{skip}{lt}{credit}
\vbar{6.9}{\by}{0.752}{0.351}{look}{lt}{credit}   \vbar{7.7}{\by}{0.405}{0.405}{skip}{lu}{credit}
\vbar{9.3}{\by}{0.904}{0.452}{left}{lt}{recall}   \vbar{10.1}{\by}{0.0}{0.0}{right}{lu}{recall}
\foreach \x/\n in {1.9/$s{-}1$,4.6/$k{-}1$,7.3/$u{-}1$,9.7/$t$} {
  \draw[black!30] (\x-0.62,\by) -- (\x+0.62,\by);
  \node[black!60,anchor=north] at (\x,\by-0.25) {\n};}

\draw[credit,target] (s.south) -- (1.5,\by+0.547*\bh+0.03);
\draw[credit,target] (u.south) -- (6.9,\by+0.752*\bh+0.03);
\node[text=credit,anchor=east] at (1.53,-0.65) {$\gamma^{t-s+1}b_{s,t}(\text{look})$};
\node[text=credit,anchor=east] at (6.88,-0.65) {$\gamma^{t-u+1}b_{u,t}(\text{look})$};
\draw[bellman,target] (t.south) -- (9.3,\by+0.904*\bh+0.03);

\draw[bellman,thick] (r1.north) to[out=90,in=0] (11.8,0.9) -- (11.4,0.9);
\draw[bellman,dotted,line width=1.3pt] (11.25,0.9) -- (10.85,0.9);
\draw[bell] (10.7,0.9) -- (10.4,0.9) to[out=180,in=60] (l1.north east);
\draw[bell] (l1.south) to[bend left=35] (t.east);
\node[text=bellman,anchor=south] at (10.95,0.95) {$r_{j}+\gamma\max_{a'}Q^*(H_{j+1},a')$};

\coordinate (P) at (9.3,0.75);
\tikzset{marks/.style={postaction={decorate,decoration={markings,
   mark=at position #1 with {\arrow[credit,line width=0.9pt]{Stealth[length=5pt]}},
   mark=at position #1+0.07 with {\arrowreversed[recall,line width=0.9pt]{Stealth[length=5pt]}}}}}}
\draw[dual] (t.north) -- (P);
\draw[dual,marks=0.46] (P) .. controls (9.3,3.0) and (1.9,3.1) .. (s.north);
\draw[dual,marks=0.35] (P) .. controls (9.3,2.0) and (7.3,2.1) .. (u.north);
\draw[credit,-{Stealth[length=4.5pt]}] ($(s.north)+(0,0.25)$) -- (s.north);
\draw[credit,-{Stealth[length=4.5pt]}] ($(u.north)+(0,0.25)$) -- (u.north);
\draw[recall,-{Stealth[length=4.5pt]}] ($(t.north)+(0,0.3)$) -- (t.north);
\node[text=credit,anchor=south] at (1.,2.) {$b_{s,t}(\text{look}) = \Lambda_t(\{s\}; C_{s,t,\emptyset} \mid B_s)$};
\node[text=credit,anchor=east] at (7.58,1.5) {$b_{u,t}(\text{look}) = \Lambda_t(\{u\}; C_{u,t,\emptyset} \mid B_u)$};
\node[text=recall,anchor=west] at (9.,1.95) {$\Lambda_t(\{s,u\};\emptyset\mid[t])$};

\begin{scope}[font=\scriptsize]
  \node[anchor=west] at (-0.6,-2.6) {\tikz{\draw[taken] (0,0)--(0.4,0);}~recorded\quad
  \tikz{\draw[black!85,thick] (0,0)--(0.15,0); \draw[black!85,dotted,line width=1.3pt] (0.22,0)--(0.42,0); \draw[taken] (0.49,0)--(0.65,0);}~recorded, many steps\quad
  \tikz{\draw[untaken] (0,0)--(0.45,0);}~not taken\quad
  \tikz{\draw[black!55] (0,0)--(0.55,0); \path[postaction={decorate,decoration={markings,mark=at position 0.35 with {\arrow[credit,line width=0.9pt]{Stealth[length=5pt]}},mark=at position 0.8 with {\arrowreversed[recall,line width=0.9pt]{Stealth[length=5pt]}}}}] (0.55,0)--(0,0);}~dual arrow: \textcolor{credit}{credit}\,/\,\textcolor{recall}{recall}\quad
  \tikz{\draw[bellman,thick] (0.65,0)--(0.5,0); \draw[bellman,dotted,line width=1.3pt] (0.43,0)--(0.23,0); \draw[bell] (0.16,0)--(0,0);}~\textcolor{bellman}{Bellman backups}};
  \node[anchor=west] at (-0.6,-3.0) {\tikz{\draw[black!80,line width=0.35pt,fill=black!18] (0,0) rectangle (0.13,0.28);} $Q_j^*(H_{j},a)$ of each action $a$\quad
  \tikz{\fill[credit!80] (0,0) rectangle (0.13,0.28);}~information value of observation\quad
  \tikz{\fill[recall!80] (0,0) rectangle (0.13,0.28);}~information value of recall\quad
  \tikz{\draw[credit,target] (0,0)--(0.4,0);}\,\tikz{\draw[bellman,target] (0,0)--(0.4,0);}~TD target};
\end{scope}
\end{tikzpicture}
\caption{Information value on a recorded POMDP trajectory. Decision $t$ needs $x_1$ (at $s$) and $x_2$ (at $u$) jointly; the look at $k{-}1$ would repeat $x_1$ but is not taken, so $R = \emptyset$ attains the sup in Eq. \eqref{eq:b}. Bellman backups carry the reward back step by step; from $t$, where the information changes the action, the dual arrow jumps back. Along orange it credits the acquiring actions, flooring their TD targets; $x_1$ is credited since $x_2$ is in its base set. Along blue it values recalling $s$ and $u$ jointly.}
\label{fig:voi}
\end{figure}

%% file: limitations.tex
\section{Limitations}

\paragraph{Validation.} Neither the information-value estimator nor the floored target is implemented yet, so the expected faster convergence from the update in Eq. \eqref{eq:update} remains a hypothesis (Remark \ref{rem:quick_note}). We prove a floor on the TD target but do not relate information value to causal notions of contribution, such as whether an observation is necessary for the chosen action.
\paragraph{Estimation.} The bounds hold for exact quantities. A learned world model and regressed estimators carry errors, maximizing over actions within the model can exploit them, and the supremum over $R$ in Eq. \eqref{eq:b} is biased upward when taken over noisy estimates; each can push the estimated floor above $Q^*$. Restricting $R$ to steps near $s$, e.g. in space, reduces both cost and bias. Information value is defined under the exploration policy $\mu$. Thus, actions taken after $s$ may reveal the masked $h_s$ and lower its credit, and an archive collected by a changing policy requires the policy version to be recorded in the history. Training still needs full histories; the savings arise only at inference.
\paragraph{Modeling choices.}
Indexing information by timestep is partly arbitrary. Neighbouring observations may share redundant, synergistic and unique information. Since the supremum in Eq.~\eqref{eq:b} selects a single $R$, $b_{s,t}$ credits the larger of these components rather than their sum; if $h_s$ and a later $h_u$ are both redundant and synergistic and the synergy dominates, the redundant part is credited to neither. $\Lambda_t$ could be used to separate these components.
Open questions remain: retrieval chains in which one memory is needed to locate another; and whether $M^*_t$ and $b_{s,t}$ arise from one common object. In future work, the world model could also assess whether a collected trajectory adds useful training data, yielding an exploration information value to train a dedicated exploration policy.

%% file: v2_appendix.tex
\appendix

\section{Expressions of U}\label{s:exp_u}

As in the main text, see Eq.~\eqref{eq:def_U}, the definition of $U_t$ is

\begin{equation}
    \begin{aligned}
        U_t(A \mid B) := \mathbb{E}_{\pi_A}[G_t \mid H_B], \quad \pi_A \in \arg\max^\prec_{\pi \in \Pi_{A,t}} \mathbb{E}_\pi[G_t \mid H_A], \\ \Pi_{A,t} := \big\{\pi : \pi_k = \mu_k \text{ for } k < t,\ \pi_t(\cdot \mid H_t) = \pi_t(\cdot \mid H_A)\big\}.
    \end{aligned}
\end{equation}

Let us first consider the inner term.

\begin{proposition}

Given $A \subseteq [t]$,

\begin{equation}\label{eq:U_Q}
\begin{aligned}
\sup_{\pi \in\Pi_{A,t}}\mathbb E_\pi\left[G_t \mid H_A \right]
&\overset{\text{(a)}}=\max_{a}\mathbb{E}_{\mu,a_t \leftarrow a}\Big[r_t+\gamma\sup_\pi\mathbb{E}_\pi[G_{t+1}\mid H_{t+1}]\;\Big|\;H_A\Big]\\
&\overset{\text{(b)}}=\max_{a}\mathbb{E}_{\mu,a_t \leftarrow a}\big[r_t+\gamma U_{t+1}([t{+}1]\mid[t{+}1])\;\big|\;H_A\big]\\
&\overset{\text{(c)}}=\max_{a}Q^*_t(H_A,a).
\end{aligned}
\end{equation}

\begin{proof}
\textbf{(a)}

Write $G_t = r_t + \gamma G_{t+1}$. For any $\pi \in \Pi_{A,t}$, the history up to $t$ follows $\mu$ and $a_t$ is drawn from $\pi_t(\cdot \mid H_A)$ alone. The tower property (using $H_A \subseteq H_{t+1}$) therefore gives
\begin{equation*}
    \begin{aligned}
\mathbb{E}_\pi[G_t \mid H_A]
= \sum_a \pi_t(a \mid H_A)\, \mathbb{E}_{\mu, a_t \leftarrow a}\Big[r_t + \gamma\, \mathbb{E}_\pi[G_{t+1} \mid H_{t+1}] \,\Big|\, H_A\Big].
    \end{aligned}
\end{equation*}

The constraint in $\Pi_{A,t}$ fixes $\pi$ before $t$ to $\mu$ and restricts $\pi_t$. The later rules $\pi_{t+1}, \pi_{t+2}, \dots$ are free and may depend on the full $H_{t+1}$. So the supremum splits into a choice of $\pi_t$ and a choice of the continuation:

\begin{equation*}
    \begin{aligned}
\sup_{\pi \in \Pi_{A,t}} \mathbb{E}_\pi[G_t \mid H_A]
=& \sup_{\pi_t,\pi} \sum_a \pi_t(a \mid H_A)\, \mathbb{E}_{\mu, a_t \leftarrow a}\Big[r_t + \gamma \mathbb{E}_{\pi}[G_{t+1} \mid H_{t+1}] \,\Big|\, H_A\Big] \\
\overset{\ast}=& \sup_{\pi_t} \sum_a \pi_t(a \mid H_A)\, \mathbb{E}_{\mu, a_t \leftarrow a}\Big[r_t + \gamma \sup_{\pi} \mathbb{E}_{\pi}[G_{t+1} \mid H_{t+1}] \,\Big|\, H_A\Big] \\
=& \max_a \mathbb{E}_{\mu, a_t \leftarrow a}\Big[r_t + \gamma \sup_{\pi} \mathbb{E}_{\pi}[G_{t+1} \mid H_{t+1}] \,\Big|\, H_A\Big].
    \end{aligned}
\end{equation*}

$\pi$ may depend on $H_{t+1}$, which contains $H_A$ and $a_t$, and
$\mathbb{E}_\pi[G_{t+1}\mid H_{t+1}]$ depends only on $H_{t+1}$ and on $\pi$ after $H_{t+1}$. Thus, $\ast$ follows from the interchangeability principle \citep{shapiro2017interchangeability}.

\textbf{(b)}
\begin{equation}
    \begin{aligned}
U_{t+1}([t+1] \mid [t+1])
&= \mathbb{E}_{\pi_{[t+1]}}[G_{t+1} \mid H_{t+1}]
&& \text{(Eq.~\eqref{eq:def_U})} \\
&= \sup_{\pi \in \Pi_{[t+1],t+1}} \mathbb{E}_\pi[G_{t+1} \mid H_{t+1}]
&& (\pi_{[t+1]} \text{ is a maximiser}) \\
&= \sup_\pi \mathbb{E}_\pi[G_{t+1} \mid H_{t+1}].
    \end{aligned}
\end{equation}
The last equality holds because, for $A = [t+1]$, the constraint on
$\pi_{t+1}$ in $\Pi_{[t+1],t+1}$ is vacuous, and the constraint
$\pi_k = \mu_k$ for $k \le t$ does not affect the conditional expectation
given $H_{t+1}$.

\textbf{(c)}
The last equality follows directly from the definition of $Q_t^*$ in Eq.~\eqref{eq:def_Qstar}.

\end{proof}

\end{proposition}

\begin{proposition}\label{prop:u_to_q}
    Let $A, B \subseteq [t]$ and $a^*_t(H_A) = \arg\max^\prec_a Q^*_t(H_A, a)$. Then
    \begin{equation}
        U_t(A \mid B) \overset{\text{(a)}}= \mathbb{E}_\mu\big[Q^*_t\big(H_{A \cup B}, a^*_t(H_A)\big) \,\big|\, H_B\big].
    \end{equation}
In particular,
\begin{equation}\begin{aligned}
 B \subseteq A:&\quad U_t(A \mid B) \overset{\text{(b)}}= \mathbb{E}_\mu\big[\max_a Q^*_t(H_A, a) \,\big|\, H_B\big], &&\text{"Learning"} \\
A \subseteq B:&\quad U_t(A \mid B) \overset{\text{(c)}}= Q^*_t\big(H_B, a^*_t(H_A)\big). &&\text{"Forgetting" }
\end{aligned}\end{equation}

\begin{proof}
\textbf{(a)}

Consider $\pi_A = \arg\max^\prec_{\pi \in \Pi_{A,t}} \mathbb{E}_\pi\left [G_t \mid H_A \right ]$. After $t$, $\pi_A$ rules free and optimal, this gives
\begin{equation*}
    \mathbb{E}_{\pi_A}[G_{t+1} \mid H_{t+1}] = U_{t+1}([t+1] \mid [t+1]).
\end{equation*}
Write $G_t = r_t + \gamma G_{t+1}$. Since $a^*_t(H_A)$ is a function of $H_t$, Eq.~\eqref{eq:def_Qstar} with $A = [t]$ gives
\begin{equation*}
\mathbb{E}_{\pi_A}[G_t \mid H_t]
= \mathbb{E}_{\mu,\, a_t \leftarrow a^*_t(H_A)}\big[r_t + \gamma U_{t+1}([t+1] \mid [t+1]) \,\big|\, H_t\big]
= Q^*_t\big(H_t, a^*_t(H_A)\big).
\end{equation*}
Up to $t$, $\pi_A$ follows $\mu$. Since $B \subseteq [t]$, the tower property gives
\begin{equation*}
    U_t(A \mid B)
= \mathbb{E}_{\pi_A}[G_t \mid H_B]
= \mathbb{E}_\mu\big[\mathbb{E}_{\pi_A}[G_t \mid H_t] \,\big|\, H_B\big]
= \mathbb{E}_\mu\big[Q^*_t\big(H_t, a^*_t(H_A)\big) \,\big|\, H_B\big].
\end{equation*}

Fix an action $a$. Since $H_{A \cup B} \subseteq H_t$, Eq.~\eqref{eq:def_Qstar} and the tower property give

\begin{equation}\label{eq:fixed-a}
\begin{aligned}
\mathbb{E}_\mu\big[Q^*_t(H_t, a) \,\big|\, H_{A \cup B}\big] &= \mathbb{E}_{\mu, a_t \leftarrow a}\big[Q^*_t(H_t, a) \,\big|\, H_{A \cup B}\big]\\
&= \mathbb{E}_{\mu, a_t \leftarrow a}\Big[\mathbb{E}_{\mu,\, a_t \leftarrow a}\big[r_t + \gamma U_{t+1}([t+1] \mid [t+1]) \,\big|\, H_t\big] \,\Big|\, H_{A \cup B}\Big] \\
&=\mathbb{E}_{\mu,\, a_t \leftarrow a}\Big[r_t + \gamma U_{t+1}([t+1] \mid [t+1])  \,\Big|\, H_{A \cup B}\Big] \\
&= Q^*_t(H_{A \cup B}, a).
\end{aligned}
\end{equation}

Now use $B \subseteq A \cup B$:

\begin{equation*}
\begin{aligned}
&\mathbb{E}_\mu\big[Q^*_t\big(H_t, a^*_t(H_A)\big) \,\big|\, H_B\big]\\
&= \mathbb{E}_\mu\Big[\mathbb{E}_\mu\big[Q^*_t\big(H_t, a^*_t(H_A)\big) \,\big|\, H_{A \cup B}\big] \,\Big|\, H_B\Big] && \text{(by law of total E., $B \subseteq A \cup B$)}\\
&= \mathbb{E}_\mu\Big[\sum_a \mathbf{1}\{a^*_t(H_A) = a\}\, \mathbb{E}_\mu\big[Q^*_t(H_t, a) \,\big|\, H_{A \cup B}\big] \,\Big|\, H_B\Big]  && \text{($a^*_t(H_A)$ is a function of $H_{A \cup B}$)} \\
&= \mathbb{E}_\mu\big[Q^*_t\big(H_{A \cup B}, a^*_t(H_A)\big) \,\big|\, H_B\big]. && \text{(\eqref{eq:fixed-a}, exactly one indicator is 1)}
\end{aligned}
\end{equation*}

\textbf{(b)}

Given $B \subseteq A$, $A\cup B = A$. It follows
\begin{equation}
    \mathbb{E}_\mu\big[Q^*_t\big(H_{A \cup B}, a^*_t(H_A)\big) \,\big|\, H_B\big] = \mathbb{E}_\mu\big[\max_a Q^*_t\big(H_{A}, a\big) \,\big|\, H_B\big]
\end{equation}

\textbf{(c)}
Given $A \subseteq B$, $A\cup B = B$. It follows
\begin{equation}
    \begin{aligned}
    &\mathbb{E}_\mu\big[Q^*_t\big(H_{A \cup B}, a^*_t(H_A)\big) \,\big|\, H_B\big]
    =\mathbb{E}_\mu\big[Q^*_t\big(H_{B}, a^*_t(H_A)\big) \,\big|\, H_B\big] \\
    &=Q^*_t\big(H_{B}, a^*_t(H_A)\big) && \text{($Q^*_t\big(\dots\big)$ is a function of $H_B$)}
\end{aligned}
\end{equation}

\end{proof}

\end{proposition}

\section{New Information Carries non-negative value}
This proposition is a simple case of the Blackwell theorem \citep{blackwell1953equivalent}.
\begin{proposition}\label{prop:black}
    Let $S, A, B \subseteq [t]$ with $B \subseteq A$. Then
    \begin{equation}
        U_t(S \cup A|B) \geq U_t(A|B)
    \end{equation}

\begin{proof}

Since $B \subseteq A \subseteq S \cup A$, Proposition~\ref{prop:u_to_q}(b) and the tower property give

\begin{equation*}
    U_t(S \cup A \mid B)
= \mathbb{E}_\mu\big[\max_a Q^*_t(H_{S \cup A}, a) \,\big|\, H_B\big]
= \mathbb{E}_\mu\Big[\mathbb{E}_\mu\big[\max_a Q^*_t(H_{S \cup A}, a) \,\big|\, H_A\big] \,\Big|\, H_B\Big].
\end{equation*}

For the inner term, fix any action $a^{(0)}$. Then

\begin{equation*}
\begin{aligned}
\mathbb{E}_\mu\big[\max_a Q^*_t(H_{S \cup A}, a) \,\big|\, H_A\big]
&\ge \mathbb{E}_\mu\big[Q^*_t(H_{S \cup A}, a^{(0)}) \,\big|\, H_A\big] \\
&= \mathbb{E}_\mu\Big[\mathbb{E}_\mu\big[Q^*_t(H_t, a^{(0)}) \,\big|\, H_{S \cup A}\big] \,\Big|\, H_A\Big]
&& \text{(Eq.~\eqref{eq:fixed-a} with } S \cup A\text{)} \\
&= \mathbb{E}_\mu\big[Q^*_t(H_t, a^{(0)}) \,\big|\, H_A\big]
&& \text{(tower, } A \subseteq S \cup A\text{)} \\
&= Q^*_t(H_A, a^{(0)})
&& \text{(Eq.~\eqref{eq:fixed-a} with } A\text{)}.
\end{aligned}
\end{equation*}

Since $a^{(0)}$ is arbitrary, the inner term is at least $\max_a Q^*_t(H_A, a)$. Proposition \ref{prop:u_to_q}(b) applies because $B \subseteq A$. Together this gives

$$U_t(S \cup A \mid B) \ge \mathbb{E}_\mu\big[\max_a Q^*_t(H_A, a) \,\big|\, H_B\big] = U_t(A \mid B). \qquad \square$$

\end{proof}
\end{proposition}

\section{Acquisition bound}\label{app:acq_new}

\begin{proposition}[Acquisition bound]\label{prop:acq}
Fix $t\ge s$, an action $a$ and $R \subseteq \{s+1, \dots, t\}$, let
\begin{equation*}
\begin{aligned}
&B_s := B_s(a), \qquad\qquad C_R := C_{s,t,R}(a),\\
&L_{s,t,R}(a) := \mathbb{E}_\mu\Big[\sum_{k=s-1}^{t-1} \gamma^{k-s+1} r_k \,\Big|\, H_{s-1},\ a_{s-1} = a\Big] + \gamma^{t-s+1}\, U_t(C_R \mid B_s).
\end{aligned}
\end{equation*}
Then
\begin{equation}\label{eq:acq}
Q_{s-1}^*(H_{s-1},a)\;\ge\;L_{s,t,R}(a)+\gamma^{t-s+1}\,\Lambda_t\big(\{s\};C_R|B_s).
\end{equation}

\end{proposition}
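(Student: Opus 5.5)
The plan is to collapse the right-hand side of \eqref{eq:acq} into a single value-with-information term. Then I bound that term from above, first by full information at $t$ (Proposition~\ref{prop:black}) and then by the optimal continuation value.

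\textbf{Step 1 (collapse).} Set $D := C_R \cup \{s\}$. By \eqref{eq:Lambda}, $\Lambda_t(\{s\};C_R\mid B_s) = U_t(D\mid B_s) - U_t(C_R\mid B_s)$, so the $U_t(C_R\mid B_s)$ terms cancel. The right-hand side of \eqref{eq:acq} becomes
\begin{equation*}
\mathbb{E}_\mu\Big[\sum_{k=s-1}^{t-1}\gamma^{k-s+1} r_k \,\Big|\, H_{s-1}, a_{s-1}=a\Big] + \gamma^{t-s+1}\, U_t(D\mid B_s).
\end{equation*}
Next I check the inclusions. Since $R\subseteq\{s+1,\dots,t\}$, the set $[t]\setminus R$ contains $[s-1]\cup\{s\}$. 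Together with $\alpha_{s-1}$ this gives $B_s \subseteq D \subseteq [t]$. Hence $R$ plays no further role, and the bound holds uniformly in $R$.

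\textbf{Step 2 (information monotonicity).} Apply Proposition~\ref{prop:black} with $S=[t]$, $A=D$ and $B=B_s$, which is allowed because $B_s\subseteq D$. This gives $U_t(D\mid B_s) \le U_t([t]\mid B_s)$. By Proposition~\ref{prop:u_to_q}(b) and Eq.~\eqref{eq:U_Q}, the latter equals $\mathbb{E}_\mu[\max_{a'}Q^*_t(H_t,a')\mid H_{B_s}] = \mathbb{E}_\mu[U_t([t]\mid[t])\mid H_{B_s}]$. Conditioning on $H_{B_s}$ is the same as conditioning on $(H_{s-1}, a_{s-1}=a)$. By the tower property, the right-hand side is therefore at most
\begin{equation*}
\mathbb{E}_{\mu, a_{s-1}\leftarrow a}\Big[\sum_{k=s-1}^{t-1}\gamma^{k-s+1} r_k + \gamma^{t-s+1} U_t([t]\mid[t]) \,\Big|\, H_{s-1}\Big].
\end{equation*}

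\textbf{Step 3 (suboptimal continuation).} It remains to show that $Q^*_{s-1}(H_{s-1},a)$ dominates this expression. By Eq.~\eqref{eq:def_Qstar} and part (b) of the proof of Eq.~\eqref{eq:U_Q},
\begin{equation*}
Q^*_{s-1}(H_{s-1},a) = \mathbb{E}_{\mu,a_{s-1}\leftarrow a}\big[r_{s-1} + \gamma \sup_\pi \mathbb{E}_\pi[G_s\mid H_s]\,\big|\, H_{s-1}\big].
\end{equation*}
Consider the particular continuation $\pi'$ that follows $\mu$ at steps $s,\dots,t-1$ and acts optimally from $t$ on. Then $\sup_\pi \mathbb{E}_\pi[G_s\mid H_s] \ge \mathbb{E}_{\pi'}[G_s\mid H_s]$. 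Splitting $G_s$ at $t$ and applying the tower property over $H_t$ (as in part (b) of the proof of Eq.~\eqref{eq:U_Q}) gives
\begin{equation*}
\mathbb{E}_{\pi'}[G_s\mid H_s] = \mathbb{E}_\mu\Big[\sum_{k=s}^{t-1}\gamma^{k-s} r_k + \gamma^{t-s} U_t([t]\mid[t]) \,\Big|\, H_s\Big].
\end{equation*}
Substituting this, multiplying by $\gamma$, and applying the tower property from $H_s$ down to $H_{s-1}$ yields exactly the bound of Step 2. When $t=s-1+1$ the reward sum reduces to the single term $r_{s-1}$, and the argument is the same.

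\textbf{Expected obstacle.} The delicate step is Step 3. One has to justify that $\pi'$, which is $\mu$ up to $t-1$ and optimal afterwards, is admissible in the supremum. One also has to check that its value from $t$ is exactly $U_t([t]\mid[t])$ even though the pre-$t$ actions follow $\mu$. I would handle both points with the same vacuous-constraint argument used in part (b) of the proof of Eq.~\eqref{eq:U_Q}: conditional on $H_t$, the value depends only on the rules applied from $t$ onward. A minor bookkeeping point is to confirm that conditioning on $H_{B_s}$ coincides with fixing $a_{s-1}=a$ under $\mu$. This holds because $\mu$ gives every action positive probability, so the conditioning event has positive probability.
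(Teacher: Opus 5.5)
Your proof is correct and uses the same ingredients as the paper's: the plain floor obtained by following $\mu$ until $t-1$ and then acting optimally, Proposition~\ref{prop:black} applied with $B_s \subseteq [t]\setminus R = C_R\cup\{s\}$, and the telescoping of $\Lambda_t$. You only assemble them in the reverse direction, whereas the paper starts from $Q^*_{s-1}(H_{s-1},a)$ and bounds downward; the obstacle you flag in Step 3 also goes away, because the maximiser $\pi_{[t]}\in\Pi_{[t],t}$ already coincides with $\mu$ before $t$ and attains $U_t([t]\mid[t])$ by definition, which is the paper's one-line justification.
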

\begin{proof}
\begin{equation}\label{eq:plain_floor}
    \begin{aligned}
        &Q_{s-1}^*(H_{s-1},a)\\
        &\ge \mathbb{E}_{\mu,\, a_{s-1} \leftarrow a}\Big[\sum_{k=s-1}^{t-1} \gamma^{k-s+1} r_k + \gamma^{t-s+1}\, U_t([t] \mid [t]) \,\Big|\, H_{s-1}\Big] && \text{(follow $\mu$ until $t-1$, then act optimally)}\\
        &= \mathbb{E}_{\mu,\, a_{s-1} \leftarrow a}\Big[\sum_{k=s-1}^{t-1} \gamma^{k-s+1} r_k \,\Big|\, H_{s-1} \Big] \\&\qquad + \mathbb{E}_{\mu,\, a_{s-1} \leftarrow a}\Big[\gamma^{t-s+1}\, U_t([t] \mid [t]) \,\Big|\, H_{s-1}\Big]\\
        &= \mathbb{E}_{\mu}\Big[\sum_{k=s-1}^{t-1} \gamma^{k-s+1} r_k \,\Big|\, H_{s-1}, \, a_{s-1} = a\Big] \\&\qquad + \mathbb{E}_{\mu}\Big[\gamma^{t-s+1}\, U_t([t] \mid [t]) \,\Big|\, H_{s-1}, a_{s-1} = a \Big] && \text{($\mu_{s-1}$ depends only on $H_{s-1}$, full support)}\\
        &= \mathbb{E}_{\mu}\Big[\sum_{k=s-1}^{t-1} \gamma^{k-s+1} r_k \,\Big|\, H_{s-1}, \, a_{s-1} = a\Big] + \gamma^{t-s+1}\, U_t([t] \mid B_s)
    \end{aligned}
\end{equation}

The last line followed from Eq.~\eqref{eq:def_U} and the tower property, with $B_s \subseteq [t]$.

Now we find lower bound of $U_t([t] ~|~ B_s)$:

\begin{equation}\label{eq:u_bound_2}
\begin{aligned}
U_t([t] \mid B_s)
&\overset{(a)}\ge U_t([t] \setminus R \mid B_s)
&& \text{(Prop.~\ref{prop:black}, $B_s \subseteq [t] \setminus R$)}\\
&= U_t([t] \setminus R \mid B_s) - U_t(C_R \mid B_s) + U_t(C_R \mid B_s)
&& \text{(add and subtract)}\\
&= \Lambda_t(\{s\};\ C_R \mid B_s) + U_t(C_R \mid B_s)
&& \text{(Eq.~\eqref{eq:Lambda}, $C_R \cup \{s\} = [t] \setminus R$)}
\end{aligned}
\end{equation}

By plugging Eq. \eqref{eq:u_bound_2} into Eq. \eqref{eq:plain_floor}, we obtain:

\begin{equation*}
    \begin{aligned}
        &Q_{s-1}^*(H_{s-1},a)\\
        &\geq \mathbb E_{\mu}\big[\sum_{k=s-1}^{t-1}\gamma^{k-s+1}r_k~|~H_{s-1}, a_{s-1}=a\big] + \gamma^{t-s+1} \big[\Lambda_t(\{s\};\ C_R \mid B_s) + U_t(C_R \mid B_s)\big] \\
        & = L_{s,t,R}(a) + \gamma^{t-s+1}\Lambda_t(\{s\};\ C_R \mid B_s)
    \end{aligned}
\end{equation*}

\end{proof}

\begin{corollary}\label{corr:L_b_bound}
Fix $t \geq s$ and an action $a$,
\begin{equation}
    Q_{s-1}^*(H_{s-1},a)\geq L_{s,t}(a) + \gamma^{t-s+1} b_{s,t}(a)
\end{equation}
\begin{proof}

Recall the definition of $L_{s,t}$ in Eq.~\eqref{eq:def_Lst}:

\begin{equation}
\begin{aligned}
L_{s,t}(a) &:= \mathbb{E}_\mu\Big[\sum_{k=s-1}^{t-1} \gamma^{k-s+1} r_k \,\Big|\, H_{s-1},\ a_{s-1} = a\Big] + \gamma^{t-s+1}\, U_t\Big([s-1] \cup \{\alpha_{s-1}\} \,\Big|\, [s-1] \cup \{\alpha_{s-1}\}\Big)\\
&= \mathbb{E}_\mu\Big[\sum_{k=s-1}^{t-1} \gamma^{k-s+1} r_k \,\Big|\, H_{s-1},\ a_{s-1} = a\Big] + \gamma^{t-s+1}\, U_t\Big(B_s \,\Big|\, B_s\Big) \\&\qquad \text{(with $B_s$, $C_R$ as in Prop. \ref{prop:acq})}.
\end{aligned}
\end{equation}

Note that for all $R \subseteq \{s+1, \dots, t\}$:

\begin{equation}
    C_R \supseteq C_{\{s+1, \dots, t\}} = B_s
\end{equation}

Proposition \ref{prop:black} with $A = B_s$, $S = C_R$ and $B = B_s$ gives $U_t(C_R \mid B_s) \ge U_t(B_s \mid B_s)$. The reward terms of $L_{s,t,R}(a)$ and $L_{s,t}(a)$ are identical, so
\begin{equation}
    L_{s,t,R}(a) \geq L_{s,t}(a)
\end{equation}
therefore,
\begin{equation}
\begin{aligned}
Q_{s-1}^*(H_{s-1},a)\;&\ge\; \sup_{R \subseteq\{s+1, \dots, t\}} L_{s,t,R}(a)+\gamma^{t-s+1}\,\Lambda_t\big(\{s\};C_R|B_s\big) && \text{(Prop.~\ref{prop:acq})}\\
&\geq \sup_{R \subseteq\{s+1, \dots, t\}} L_{s,t}(a)+\gamma^{t-s+1}\,\Lambda_t\big(\{s\};C_R|B_s\big)\\
&\geq L_{s,t}(a) + \gamma^{t-s+1} b_{s,t}(a) && \text{(Eq.~\eqref{eq:b})}
\end{aligned}
\end{equation}
\end{proof}
\end{corollary}

\begin{remark}[Why split the floor]\label{rem:quick_note}
The last line of Eq. \eqref{eq:plain_floor} is itself a valid floor,
\[
P_{s,t}(a) := \mathbb{E}_\mu\Big[\textstyle\sum_{k=s-1}^{t-1}\gamma^{k-s+1} r_k \,\Big|\, H_{s-1},\, a_{s-1}=a\Big]
+ \gamma^{t-s+1}\, U_t\big([t] \,\big|\, B_s\big),
\]
but it is hard to estimate when $h_s$ is rarely observed. Because $U_t([t] \mid B_s)$ depends on $h_s$, only the episodes that acquired $h_s$ inform it, and a bootstrapped estimate must carry this dependence back from $t$ over $t-s+1$ steps.

Corollary~1 splits the floor so that this dependence is isolated. The baseline $L_{s,t}(a)$ contains $U_t(B_s \mid B_s)$, which masks everything acquired from $s$ onward and can therefore be fitted from every episode that reaches $t$. The part that depends on $h_s$ is confined to $b_{s,t}(a)$, a difference of values at the decision $t$, where $h_s$ changes the action \ref{fig:voi}. We therefore expect $L_{s,t}(a) + \gamma^{t-s+1} b_{s,t}(a)$ to become informative earlier in training than an estimate of $P_{s,t}(a)$ when $\Lambda_t \gg 0$.

The split is a reorganization, not a tightening. For exact quantities, Eq. \eqref{eq:u_bound_2}(a) gives $L_{s,t,R}(a) + \gamma^{t-s+1}\Lambda_t(\{s\}; C_R \mid B_s) \le P_{s,t}(a)$, with equality at $R=\emptyset$, so the floor of Corollary \ref{corr:L_b_bound} never exceeds $P_{s,t}(a)$. The choice of $R$ only determines how much of the floor is credited to $h_s$, which handles redundancy and synergy. The benefit of the split is therefore in estimation, and it remains to be tested.
\end{remark}

%% file: supplementary.tex
\section{Supplementary Material}\label{app:supplementary}

\paragraph{Code.}
The implementation of Memory Navigation and of all experiments is available at
\url{https://anonymous.4open.science/r/memory_navigation-728E/README.md}.

\paragraph{Rollout videos.}
Table~\ref{tab:videos} links rollout videos of the agents compared in the Memory Navigation paragraph.
The last two groups show how the agent without a retrieval head circumvents its short context:
by waiting a cue-dependent number of steps, it stores the cue in its temporal embedding;
with all positional embeddings removed and a context of 3 tokens, it instead follows a cue-dependent
path and so stores the cue in its recent observations.

\begin{table}[h]
\centering
\caption{Rollout videos on MiniGrid-Memory-S17Random. Purple denotes attention (scaled in the first panel),
the red triangle the agent, and the orange rectangle the Memory Ghost. Panels from left to right:
environment state with the agent's $3\times3$ observation grid; Transformer activations with temporal
and spatial positional encodings; memory states (single: observed, double: visited); token consumption timeline.}
\label{tab:videos}
\begin{tabular}{l}
\toprule
\textit{Baseline} \\
\quad \href{https://youtu.be/T6GYvecp2uo}{Transformer-XL without retrieval head} \\
\midrule
\textit{Retrieval heads} \\
\quad \href{https://youtu.be/1ooM1cefwcA}{Memory Navigation} \\
\quad \href{https://youtu.be/WNiA99s-hug}{Absolute position, categorical head} \\
\quad \href{https://youtu.be/KNbboIBTi5Q}{Absolute position, Gaussian head} \\
\midrule
\textit{Cue stored in the temporal embedding (no retrieval head, context 5)} \\
\quad \href{https://youtu.be/hYS3fTfLJns}{Cue: circle} \\
\quad \href{https://youtu.be/xSlaKNFGN0E}{Cue: key} \\
\midrule
\textit{Cue stored in recent observations (no positional embeddings, context 3)} \\
\quad \href{https://youtu.be/3_Pj8aRWMsk}{Cue: circle} \\
\quad \href{https://youtu.be/1uhSnoPl6cI}{Cue: key} \\
\bottomrule
\end{tabular}
\end{table}

\newpage